\documentclass[11pt]{article}

\usepackage[margin=1in]{geometry}
\usepackage{amsmath,amssymb,amsthm}
\usepackage{graphicx}
\usepackage{booktabs}
\usepackage{algorithm}
\usepackage{algpseudocode}
\usepackage{hyperref}
\usepackage{natbib}  
\usepackage{caption} 
\usepackage{booktabs}
\newtheorem{theorem}{Theorem}
\newtheorem{lemma}[theorem]{Lemma}

\usepackage{booktabs}
\usepackage{pgfplots}
\newtheorem{remark}{Remark}
\newtheorem{definition}{Definition}
\newtheorem{corollary}[theorem]{Corollary}
\usepackage{pgfplots}
\pgfplotsset{compat=1.18}
\usepackage{newfloat}
\usepackage{listings}
\title{A Geometric Theory of Robust Fairness Audits}

\author{
 Binita Maity\\
Indian Institute of Technology, Gandhinagar\\
\texttt{binitamaity@iitgn.ac.in}
}

\begin{document}

\maketitle

\begin{abstract}

Neighborhood-based fairness audits evaluate individual fairness by comparing predictions among similar individuals in feature space. Despite their widespread use, little is known about the robustness of the auditing procedure itself. Because these audits rely on nearest neighbor relationships, small perturbations in feature space can alter local neighborhoods and produce different fairness assessments even when model predictions remain unchanged. We develop a geometric framework for analyzing the robustness of neighborhood-based fairness audits under bounded perturbations. Our analysis establishes sufficient conditions for neighborhood invariance, quantifies how neighborhood replacement propagates to audit instability, and introduces audit volatility, a measure of the expected sensitivity of fairness audits under repeated perturbations. Experiments on benchmark datasets support the theoretical analysis and show that the proposed framework explains the observed stability of neighborhood-based fairness audits.

\end{abstract}

\section{Introduction}

Algorithmic decision-making systems increasingly influence high stakes domains such as criminal justice, finance, healthcare, and public policy, making fairness assessment an essential component of responsible machine learning \cite{Barocas2018FairnessAM,raji2020closing,mitchell2019model}. While much of the fairness literature has focused on group level notions such as demographic parity and equalized odds, these aggregate criteria may conceal unfair treatment of individuals. Individual fairness addresses this limitation by requiring that similar individuals receive similar outcomes \cite{10.1145/2090236.2090255}. Because application specific similarity metrics are rarely available, post hoc fairness auditing methods commonly approximate similarity using $k$-nearest neighbors in the feature space, comparing each individual's prediction with those of its local neighbors \cite{pmlr-v108-xue20a,bellamy2018aifairness360extensible}.

Neighborhood-based fairness audits are widely used because they are model agnostic, computationally efficient, and capable of detecting localized unfairness overlooked by aggregate statistics. However, they rely on a critical assumption that has received little attention: local neighborhoods remain sufficiently stable to serve as reliable proxies for similarity. In practice, feature noise, preprocessing, normalization, missing values, and dataset heterogeneity can alter nearest neighbor relationships even when the prediction model is unchanged. Consequently, identical models may produce different fairness assessments solely because the auditing neighborhood changes, making it difficult to distinguish genuine unfairness from audit instability.

Although robustness has been extensively studied in fair machine learning, existing work primarily focuses on learning prediction models that remain fair under adversarial or feature perturbations \cite{yurochkin2020trainingindividuallyfairml,roh2021sample,hashimoto2018fairness}. In contrast, the robustness of the \emph{fairness audit} itself remains largely unexplored, despite growing recognition that fairness should also be viewed as a measurement problem \cite{10.1145/3442188.3445901,gebru2021datasheetsdatasets}. There is currently no theoretical framework explaining when neighborhood-based fairness audits remain stable under feature space perturbations or how geometric changes in local neighborhoods propagate to audit outcomes.

In this paper, we present a geometric framework for analyzing the robustness of neighborhood-based fairness audits. We formulate local fairness audits as Lipschitz aggregations of pairwise fairness evaluations over local neighborhoods, providing a unified abstraction for a broad class of existing auditing methods. Building on this formulation, we derive deterministic, probabilistic, and expected robustness guarantees that relate bounded feature space perturbations to neighborhood replacement and audit deviation. Our analysis identifies the local separation margin as a geometric certificate for neighborhood preservation and establishes neighborhood replacement as the fundamental mechanism governing audit instability. We further introduce \emph{Audit Volatility}, a measure of expected audit sensitivity under repeated perturbations. Experiments on benchmark datasets validate the theoretical predictions and demonstrate that neighborhood geometry fundamentally governs the robustness of neighborhood-based fairness audits.

Our main contributions are summarized below.

\begin{itemize}
\item We formulate the robustness of neighborhood-based fairness auditing as a new theoretical problem, shifting the focus from robust prediction models to the reliability of the auditing procedure itself.

\item We develop a unified geometric framework that models neighborhood-based fairness audits as Lipschitz aggregations of pairwise fairness evaluations, encompassing a broad class of existing auditing methods.

\item We establish deterministic, probabilistic, and expected robustness guarantees that connect bounded feature space perturbations to neighborhood replacement and audit deviation, identifying neighborhood replacement as the key mechanism governing audit instability.

\item We introduce \emph{Audit Volatility}, a quantitative measure of audit robustness under repeated perturbations, and validate the proposed theory through experiments on multiple benchmark datasets.
\end{itemize}

\section{Related Work}

\paragraph{Individual Fairness and Local Fairness Audits} Individual fairness was introduced by Dwork et al.~\cite{10.1145/2090236.2090255}, who proposed that similar individuals should receive similar decisions with respect to an application specific similarity metric. Subsequent work extended this framework by studying computationally efficient notions of individual fairness \cite{kim2018fairness}, approximately metric fair learning \cite{yona2018probably}, and learning under unknown fairness metrics \cite{gillen2018online}. In practice, however, task specific similarity metrics are rarely available, and fairness is commonly evaluated through post hoc neighborhood-based audits.

Among the most widely used auditing approaches are nearest neighbor consistency measures and FaiTH \cite{pmlr-v108-xue20a}, which evaluate fairness by aggregating pairwise prediction comparisons over local neighborhoods. Toolkits such as AI Fairness 360 \cite{bellamy2018aifairness360extensible} have further popularized these audits for practical fairness evaluation. Although these methods differ in their choice of pairwise fairness functions and aggregation rules, they all rely on nearest neighbor relationships in the feature space. Our framework generalizes this common computational structure and provides a unified analysis of their robustness.

\paragraph{Robustness in Fair Machine Learning} Robustness has become an important objective in fair machine learning, with most existing work focusing on improving the robustness of predictive models. Sensitive Subspace Robustness \cite{yurochkin2020trainingindividuallyfairml} enforces individual fairness under feature perturbations by learning robust representations, while subsequent methods have incorporated robustness into training through sample selection and optimization techniques \cite{roh2021sample,hashimoto2018fairness}. These approaches improve the stability of the predictive model itself. In contrast, we assume that model predictions remain fixed and instead study the robustness of the auditing procedure, isolating instability introduced solely by changes in neighborhood structure.

\paragraph{Robust Statistics and Measurement Reliability} Our work is also closely related to robust statistics, which studies estimators that remain reliable under contaminated observations or outliers \cite{huber2009robust,hampel2001robust,maronna2019robust}. Classical robust aggregation techniques, such as trimmed estimators, motivate the robust aggregation mechanisms studied in this paper. Unlike traditional robust statistics, however, our analysis focuses on instability arising from changes in neighborhood composition rather than contaminated observations.

Finally, recent work has argued that fairness should be viewed as a measurement problem rather than solely an optimization objective \cite{10.1145/3442188.3445901}. Similarly, broader efforts in responsible AI emphasize reliable documentation and auditing practices for machine learning systems \cite{gebru2021datasheetsdatasets,mitchell2019model,raji2020closing}. Complementing these perspectives, we provide the first theoretical framework for analyzing the stability of local fairness audits under feature space perturbations, identifying neighborhood replacement as the fundamental source of audit instability and introducing Audit Volatility as a quantitative robustness measure.


\section{Framework for Neighborhood-based Fairness Audits}

Neighborhood-based fairness audits evaluate whether similar individuals receive similar predictions by aggregating pairwise fairness scores over local neighborhoods. Although existing methods differ in their choice of pairwise fairness functions and aggregation rules, they share this common computational structure. We formalize this abstraction as a unified framework for Neighborhood-based fairness audits, which serves as the foundation for the robustness analysis in Section~4.

\subsection{Local Fairness Audits}

Let $X=\{x_1,\ldots,x_n\}\subseteq(\mathcal{M},d)$ be a dataset embedded in a metric space $(\mathcal{M},d)$, and let $f:X\rightarrow\mathcal{Y}$ be a prediction model. For each individual $x_i$, let $N_k(i)$ denote its set of $k$-nearest neighbors under the metric $d$. A local fairness audit consists of two components: (i) a pairwise fairness function 
$g:\mathcal{Y}\times\mathcal{Y}\rightarrow\mathbb{R},$ which evaluates the fairness relationship between the predictions of neighboring individuals, and (ii) an aggregation operator $\Phi:\mathbb{R}^{k}\rightarrow\mathbb{R},$ which summarizes the resulting vector of pairwise fairness scores.

The local audit score for an individual $x_i$ is defined as $A_i(X)=
\Phi\!\left(
\big(g(f(x_i),f(x_j))\big)_{x_j\in N_k(i)}
\right),$ where the pairwise fairness scores are represented as a $k$-dimensional vector. The corresponding dataset level audit is 
$A(X)=\left(A_1(X),\ldots,A_n(X)\right).$ This abstraction cleanly separates pairwise fairness from aggregation, enabling a unified analysis of Neighborhood-based fairness audits.

\subsection{Lipschitz Aggregation Operators}

The robustness of a local fairness audit depends on the sensitivity of its aggregation operator. To obtain stability guarantees that extend beyond a single auditing method, we consider aggregation operators satisfying a Lipschitz continuity property.

\begin{definition}[Lipschitz Aggregation Operator]
An aggregation operator
$\Phi:\mathbb{R}^{k}\rightarrow\mathbb{R}$
is said to be $L_{\Phi}$-Lipschitz with respect to the $\ell_1$ norm if, for every
$u,v\in\mathbb{R}^{k}$,
\[
|\Phi(u)-\Phi(v)|
\le
L_{\Phi}\|u-v\|_1.
\]
The constant $L_{\Phi}$ quantifies the sensitivity of the aggregation operator to perturbations of its inputs.
\end{definition}

This class includes many aggregation rules used in fairness auditing and robust statistics. For example, the arithmetic mean satisfies $L_{\Phi}=1/k$, while weighted means, trimmed means, Winsorized means, and several robust $M$-estimators are also Lipschitz under mild regularity conditions. Consequently, the theoretical results in Section~4 apply beyond arithmetic averaging to a broad family of neighborhood-based fairness audits.

\paragraph{Examples.}
Consistency based audits instantiate the framework using $g(f(x_i),f(x_j))
=
1-|f(x_i)-f(x_j)|,$ together with arithmetic mean aggregation. Similarly, FaiTH~\cite{pmlr-v108-xue20a} computes pairwise prediction discrepancies over local neighborhoods and summarizes them through an aggregation operator. Consequently, both methods instantiate the proposed framework, and the analysis developed in Section~4 applies uniformly to these and other neighborhood-based fairness audits employing Lipschitz aggregation operators.

The proposed framework identifies two complementary sources of audit robustness: the geometric stability of local neighborhoods and the sensitivity of the aggregation operator. Section~4 shows how bounded feature space perturbations propagate through neighborhood structure and how aggregation sensitivity determines the resulting stability of local fairness audits.

\section{Stability of Local Fairness Audits}

The framework introduced in the previous section expresses a local fairness audit as the aggregation of pairwise fairness scores over local neighborhoods. We now investigate how bounded feature space perturbations affect these audits. Since the prediction model is fixed throughout the analysis, any change in an audit score arises solely through changes in the underlying neighborhood structure.

Our analysis proceeds in three steps. We first introduce a geometric perturbation model and the notion of local separation. We then establish sufficient conditions for neighborhood preservation under bounded perturbations. Finally, we quantify how neighborhood replacement propagates to changes in local fairness audits.

\subsection{Geometric Perturbation Model}

Let $(\mathcal{M},d)$ be a metric space, $X=\{x_1,\ldots,x_n\}\subseteq\mathcal{M}$ a dataset, and $f:X\rightarrow\mathcal{Y}$ a fixed prediction model. For each individual $x_i$, let $N_k(i)$ denote its set of $k$-nearest neighbors under $d$, where ties are resolved deterministically. Since the prediction model remains unchanged, perturbations influence fairness assessments only through changes in neighborhood geometry.

\begin{definition}[$\varepsilon$-Bounded Geometric Perturbation]
\label{def:perturbation}
Let $X'=\{x_1',x_2',\ldots,x_n'\}$ be a perturbed version of $X$. We say that $X'$ is an \emph{$\varepsilon$-bounded geometric perturbation} of $X$ if
$$d(x_i,x_i')\le\varepsilon,
\qquad
\forall\,x_i\in X.$$ For each perturbed point $x_i'$, let $N_k'(i)$ denote its $k$-nearest neighbor set in $X'$.
\end{definition}

For the perturbed dataset $X'$, we analogously denote the corresponding local and dataset level audit scores by $A_i(X')$ and $A(X')$, respectively.

The stability of a local neighborhood depends not only on the perturbation magnitude but also on its geometric separation from the remaining data points. We capture this notion through the following definition.

\begin{definition}[Local Separation Margin]
\label{def:margin}
For every point $x_i\in X$, define
$r_{\max}(i) = \max_{x_j\in N_k(i)} d(x_i,x_j), \qquad r_{\min}(i) = \min_{x_j\notin N_k(i)}
d(x_i,x_j).$ The \emph{local separation margin} of $x_i$ is $\gamma_i = r_{\min}(i)-r_{\max}(i).$
\end{definition}

The local separation margin measures the minimum geometric gap separating an individual's neighborhood from the remaining data points. Larger values of $\gamma_i$ indicate better separated neighborhoods and are therefore expected to yield greater robustness under bounded perturbations. The following subsection shows that this margin provides a sufficient geometric certificate for neighborhood preservation.

\subsection{Geometric Stability of Local Neighborhoods}

We now characterize when local neighborhoods remain invariant under bounded feature space perturbations. We first bound perturbations of pairwise distances, then quantify the resulting change in the local separation margin, and finally derive a sufficient condition for neighborhood preservation.

\begin{lemma}[Pairwise Distance Stability]
\label{lem:distance}
Let $X'$ be an $\varepsilon$-bounded geometric perturbation of $X$. Then, for every pair of points $x_i,x_j\in X$,
\[
\left|
d(x_i',x_j')
-
d(x_i,x_j)
\right|
\le
2\varepsilon.
\]
\end{lemma}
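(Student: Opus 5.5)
The plan is to prove this with two applications of the triangle inequality in the metric space $(\mathcal{M},d)$, one for each direction of the absolute value. We use only that $d$ is a metric and the defining property of an $\varepsilon$-bounded geometric perturbation (Definition~\ref{def:perturbation}): $d(x_i,x_i')\le\varepsilon$ and $d(x_j,x_j')\le\varepsilon$.

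\emph{Upper bound.} Route the path from $x_i'$ to $x_j'$ through $x_i$ and $x_j$, and use symmetry of $d$:
\[
d(x_i',x_j') \le d(x_i',x_i) + d(x_i,x_j) + d(x_j,x_j') \le d(x_i,x_j) + 2\varepsilon .
\]
This gives $d(x_i',x_j') - d(x_i,x_j) \le 2\varepsilon$.

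\emph{Lower bound.} Swap the roles of the original and perturbed points:
\[
d(x_i,x_j) \le d(x_i,x_i') + d(x_i',x_j') + d(x_j',x_j) \le d(x_i',x_j') + 2\varepsilon .
\]
This gives $d(x_i,x_j) - d(x_i',x_j') \le 2\varepsilon$.

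Combining the two one-sided inequalities yields $|d(x_i',x_j')-d(x_i,x_j)|\le 2\varepsilon$.

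There is no substantive obstacle. The only point to check is that the argument needs nothing beyond the metric axioms (symmetry and the triangle inequality), so it holds in an arbitrary metric space and not just in Euclidean space. The degenerate case $i=j$ is trivial, since both distances are $0$.

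An equivalent compact route is to apply the quadrilateral inequality $|d(a,b)-d(c,e)|\le d(a,c)+d(b,e)$ with $a=x_i'$, $b=x_j'$, $c=x_i$, $e=x_j$. That inequality itself follows from the same two triangle-inequality steps.
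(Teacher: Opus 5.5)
Your proof is correct and matches the paper's argument: one triangle-inequality chain routed through $x_i, x_j$ gives the upper bound, and the chain with original and perturbed points swapped gives the lower bound. Your extra remarks, that only the metric axioms are used and that the case $i=j$ is trivial, are accurate and harmless.
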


\begin{proof}[Proof Sketch]
The result follows directly from two applications of the triangle inequality: each endpoint may move by at most $\varepsilon$, so the distance between two points changes by at most $2\varepsilon$. The proof is deferred to Appendix~\ref{appendix:A2}A2 due to space constraints.
\end{proof}

Lemma~\ref{lem:distance} bounds the perturbation of every pairwise distance by $2\varepsilon$. Since neighborhood membership depends on the ordering of distances rather than their absolute values, the key quantity is the separation between the farthest neighbor and the nearest non-neighbor. The next lemma quantifies the maximum reduction of this separation margin.

\begin{lemma}[Sharp Bound on Local Separation Margin Reduction]
\label{lem:margin}

For every point $x_i$, let
\[
r_{\max}(i)
=
\max_{x_j\in N_k(i)}
d(x_i,x_j),
\qquad
r_{\min}(i)
=
\min_{x_j\notin N_k(i)}
d(x_i,x_j),
\]
and define the local separation margin $\gamma_i
=
r_{\min}(i)-r_{\max}(i).$ Under an $\varepsilon$-bounded geometric perturbation, 
   
   $\gamma_i-\gamma_i'
\le
4\varepsilon.$ Moreover, the constant $4\varepsilon$ is tight.

\end{lemma}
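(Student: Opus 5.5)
The plan is to read $\gamma_i'$ as the margin of the \emph{original} partition $N_k(i)$ versus $X\setminus N_k(i)$, measured in the perturbed geometry. That is,
\[
r_{\max}'(i)=\max_{x_j\in N_k(i)} d(x_i',x_j'),\qquad r_{\min}'(i)=\min_{x_j\notin N_k(i)} d(x_i',x_j'),\qquad \gamma_i'=r_{\min}'(i)-r_{\max}'(i).
\]
This is the quantity that certifies neighborhood preservation: if $\gamma_i'>0$, then $N_k'(i)$ consists of the perturbed copies of $N_k(i)$. I will fix this convention explicitly at the start, since the bound is about how far the original gap can shrink.

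\textbf{Upper bound.} Lemma~\ref{lem:distance} gives $d(x_i',x_j')\le d(x_i,x_j)+2\varepsilon$ for every $x_j\in N_k(i)$. Taking the maximum over $N_k(i)$ yields $r_{\max}'(i)\le r_{\max}(i)+2\varepsilon$, because a maximum of functions that are each shifted by at most $2\varepsilon$ is itself shifted by at most $2\varepsilon$. Symmetrically, $d(x_i',x_j')\ge d(x_i,x_j)-2\varepsilon$ for every $x_j\notin N_k(i)$, and taking the minimum gives $r_{\min}'(i)\ge r_{\min}(i)-2\varepsilon$. Subtracting the two inequalities gives $\gamma_i'\ge\gamma_i-4\varepsilon$, which is the claimed bound $\gamma_i-\gamma_i'\le 4\varepsilon$.

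\textbf{Tightness.} I will exhibit a configuration in $\mathbb{R}$ with $d(x,y)=|x-y|$ and $k=1$ that attains $4\varepsilon$ exactly. Take $x_i=0$, a single neighbor $x_j=a>0$, and a single non-neighbor $x_l=b>a$. Then $\gamma_i=b-a$. Perturb as follows:
\begin{itemize}
\item move $x_i'=-\varepsilon$ and $x_j'=a+\varepsilon$, which makes the neighbor distance $a+2\varepsilon$;
\item move $x_l'=b-\varepsilon$.
\end{itemize}
A single query point cannot move both away from $x_j$ and toward $x_l$ at once, because both lie on the same side of it. So I will instead place the non-neighbor on the opposite side: $x_l=-b$, with $x_l'=-b+\varepsilon$ and $x_i'=-\varepsilon$ as before. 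The neighbor distance becomes $a+2\varepsilon$ and the non-neighbor distance becomes $b-2\varepsilon$. Hence $\gamma_i'=\gamma_i-4\varepsilon$, and every point moves by exactly $\varepsilon$.

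\textbf{Main obstacle.} The only delicate step is this simultaneous saturation of both $2\varepsilon$ bounds with a single displacement of $x_i$. The upper bound itself is a routine consequence of Lemma~\ref{lem:distance} together with the behavior of max and min under uniform perturbation.
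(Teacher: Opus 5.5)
Your proposal is correct and follows the paper's proof essentially step for step. The paper uses the same convention for $\gamma_i'$ (the original partition $N_k(i)$ versus its complement, measured in the perturbed geometry), applies the $2\varepsilon$ pairwise distance bound through the max and min in the same way, and uses the same $k=1$ collinear tightness example: neighbor and non-neighbor on opposite sides of $x_i$, with $x_i$ moving toward the non-neighbor.

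One detail to add: your example needs $b\ge 2\varepsilon$. Otherwise the perturbed non-neighbor distance is $|2\varepsilon-b|\neq b-2\varepsilon$, and the reduction is only $2b<4\varepsilon$. You can also drop the abandoned same-side attempt.
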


\begin{proof}[Proof Sketch]
By Lemma~\ref{lem:distance}, distances from $x_i$ to its neighbors can increase by at most $2\varepsilon$, while distances to non-neighbors can decrease by at most $2\varepsilon$. Consequently, the local separation margin decreases by at most $4\varepsilon$. The complete proof, including the tightness construction, is given in \ref{appendix:A2}Appendix~A2.
\end{proof}

Lemma~\ref{lem:margin} shows that the local separation margin can decrease by at most $4\varepsilon$. Therefore, neighborhoods whose original separation margins exceed this threshold remain geometrically distinguishable after perturbation.

\begin{theorem}[Neighborhood Invariance under Bounded Perturbations]
\label{thm:neighborhood}

Let $x_i\in X$ have local separation margin
\[
\gamma_i
=
r_{\min}(i)-r_{\max}(i).
\]

If
\[
\gamma_i>4\varepsilon,
\]
then every original neighbor of $x_i$ remains strictly closer to $x_i'$ than every original non-neighbor. Consequently,
\[
N_k'(i)=N_k(i).
\]

\end{theorem}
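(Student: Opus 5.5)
The plan is to compare distances measured from the perturbed query point $x_i'$ to the perturbed data points, and to show that the original ordering between neighbors and non-neighbors survives the perturbation. Fix an original neighbor $x_j\in N_k(i)$ and an original non-neighbor $x_l\notin N_k(i)$. Lemma~\ref{lem:distance} gives
\[
d(x_i',x_j')\le d(x_i,x_j)+2\varepsilon\le r_{\max}(i)+2\varepsilon,
\qquad
d(x_i',x_l')\ge d(x_i,x_l)-2\varepsilon\ge r_{\min}(i)-2\varepsilon .
\]
Subtracting, we get $d(x_i',x_l')-d(x_i',x_j')\ge \gamma_i-4\varepsilon>0$ by hypothesis. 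This gives the first claim: every original neighbor is strictly closer to $x_i'$ than every original non-neighbor. Equivalently, Lemma~\ref{lem:margin} applied to the original partition says the perturbed gap is still positive.

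Next, I would convert this strict separation into equality of the $k$-nearest-neighbor sets. The perturbed set $X'\setminus\{x_i'\}$ splits into the $k$ points $\{x_j' : x_j\in N_k(i)\}$ and the rest. Every point in the first group is strictly closer to $x_i'$ than every point in the second. So the $k$ smallest distances from $x_i'$ are attained exactly by the first group, and no tie straddles the boundary. Hence the deterministic tie-breaking rule cannot matter, and $N_k'(i)=N_k(i)$, where indices are identified through the correspondence $x_j\leftrightarrow x_j'$.

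The only real subtlety is bookkeeping rather than analysis. I need to interpret $N_k'(i)$ via this index correspondence, and I need the self-point $x_i$ to be excluded (or consistently included) on both sides. The bound itself is immediate from Lemma~\ref{lem:distance}. I would also note that $\gamma_i>0$ is implied by the hypothesis, so the original neighborhood was already unambiguous.
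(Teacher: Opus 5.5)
Your proof is correct and follows essentially the same route as the paper: fix an original neighbor and an original non-neighbor, apply the $2\varepsilon$ distance-distortion bound to each, subtract to obtain a gap of at least $\gamma_i-4\varepsilon>0$, and conclude that the $k$ perturbed original neighbors occupy the first $k$ positions of the perturbed distance ordering. Your remarks on tie-breaking, on the self-point, and on the index correspondence $x_j\leftrightarrow x_j'$ spell out bookkeeping the paper leaves implicit. You also correctly cite Lemma~\ref{lem:distance} for the distance bounds, where the paper's proof cites Lemma~2.
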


\begin{proof}[Proof Sketch]
Lemma~\ref{lem:margin} guarantees that the local separation margin remains positive whenever $\gamma_i>4\varepsilon$. Consequently, every original neighbor remains strictly closer than every original non-neighbor, preserving the ordering of the $k$ nearest neighbors and hence the neighborhood itself. The complete proof is deferred to \ref{appendix:A2}Appendix~A2. 
\end{proof}

Theorem~\ref{thm:neighborhood} identifies the local separation margin as a sufficient geometric certificate for neighborhood preservation under bounded perturbations. Together, Lemmas~\ref{lem:distance},~\ref{lem:margin}, and Theorem~\ref{thm:neighborhood} establish the geometric foundation of our framework: bounded perturbations induce bounded distance distortions, bounded distance distortions limit the reduction of the local separation margin, and sufficiently separated neighborhoods remain invariant. This characterization forms the basis for the deterministic, probabilistic, and expected stability guarantees developed in the following subsections.

\subsection{Probabilistic Robustness under Random Perturbations}

Theorem~\ref{thm:neighborhood} provides a deterministic guarantee for neighborhood preservation under bounded perturbations. In practice, however, feature perturbations often arise from stochastic sources such as measurement noise or randomized preprocessing. We therefore quantify the probability that a local neighborhood changes under random perturbations.

\begin{corollary}[Probabilistic Neighborhood Stability]
\label{cor:probabilistic}

Suppose the perturbed dataset $X'$ is sampled from a probability distribution over admissible perturbations, and let
\[
R=\max_{1\le i\le n} d(x_i,x_i')
\]
denote the random perturbation radius. Then, for every point $x_i$,
\[
\Pr\!\left(N_k'(i)\neq N_k(i)\right)
\le
\Pr\!\left(R\ge\frac{\gamma_i}{4}\right).
\]

\end{corollary}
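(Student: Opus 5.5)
The plan is to derive the corollary from Theorem~\ref{thm:neighborhood} by contraposition, applied pointwise on the sample space of the random perturbation. For each realization of $X'$, set $\varepsilon := R = \max_{1\le i\le n} d(x_i,x_i')$. Then $d(x_i,x_i')\le R$ for every $i$, so $X'$ is an $R$-bounded geometric perturbation of $X$ in the sense of Definition~\ref{def:perturbation}. The fixed point $x_i$ and its margin $\gamma_i$ depend only on the unperturbed data $X$, so $\gamma_i$ is deterministic, and only $R$ is random.

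The key step is the event inclusion
\[
\{N_k'(i)\neq N_k(i)\}\subseteq\{R\ge \gamma_i/4\}.
\]
I will prove it via the complement. On the event $\{R<\gamma_i/4\}$ we have $\gamma_i>4R$. Theorem~\ref{thm:neighborhood} with $\varepsilon=R$ then gives $N_k'(i)=N_k(i)$.

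One point needs care: Theorem~\ref{thm:neighborhood} is stated for a deterministic $\varepsilon$, whereas here $\varepsilon=R$ depends on the realization. This causes no difficulty, because the theorem holds for every $\varepsilon\ge 0$. We apply it realization by realization with the realized value of $R$; no uniformity over realizations is needed. Tie-breaking is deterministic, and the theorem gives strict separation, so the conclusion $N_k'(i)=N_k(i)$ holds on every such realization.

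Finally, monotonicity of probability turns the event inclusion into the claimed bound
\[
\Pr\bigl(N_k'(i)\neq N_k(i)\bigr)\le \Pr\bigl(R\ge\gamma_i/4\bigr).
\]
I do not expect a real obstacle. The only technical points are the following.
\begin{itemize}
\item Both events must be measurable. I will assume that the distribution over admissible perturbations makes $R$ a random variable, and note that $\{N_k'(i)\neq N_k(i)\}$ is determined by finitely many distance comparisons, so it is measurable as well.
\item The boundary case $R=\gamma_i/4$ is already included on the right-hand side, because the inequality there is non-strict. This matches the strict hypothesis $\gamma_i>4\varepsilon$ in Theorem~\ref{thm:neighborhood}.
\end{itemize}
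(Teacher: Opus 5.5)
Your proposal is correct and matches the paper's proof. The paper likewise applies Theorem~\ref{thm:neighborhood} with $\varepsilon=R$ to get the inclusion $\{N_k'(i)\neq N_k(i)\}\subseteq\{R\ge\gamma_i/4\}$, then concludes by monotonicity of probability; your remarks on applying the theorem realization by realization and on measurability are extra care the paper leaves implicit.
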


\begin{proof}[Proof Sketch]
By Theorem~\ref{thm:neighborhood}, the neighborhood of $x_i$ is preserved whenever $R<\gamma_i/4$. Thus, neighborhood replacement can occur only if $R\ge\gamma_i/4$, yielding the stated probability bound. The complete proof is provided in \ref{appendix:A3} Appendix~A3.
\end{proof}

Corollary~\ref{cor:probabilistic} separates neighborhood robustness into two independent components: the local separation margin $\gamma_i$, which depends solely on the geometry of the dataset, and the perturbation radius $R$, which captures the stochastic perturbation process. Consequently, any concentration inequality or tail bound on $R$ immediately yields a corresponding probabilistic guarantee for neighborhood preservation. This distribution independent characterization extends the deterministic guarantee of Theorem~\ref{thm:neighborhood} to a broad class of stochastic perturbation models and provides the probabilistic foundation for analyzing the stability of local fairness audits.

\subsection{Stability of Local Fairness Audits}

Theorem~\ref{thm:neighborhood} establishes that sufficiently separated neighborhoods remain invariant under bounded perturbations. We now characterize how neighborhood changes affect local fairness audits. Since the prediction model is fixed, perturbations influence the audit solely through changes in the neighborhood participating in the aggregation.

\begin{corollary}[Audit Invariance]
\label{cor:invariance}

Let
\[
A_i(X)=\Phi(\mathbf g_i(X))
\]
be a local fairness audit. If
\[
N_k'(i)=N_k(i),
\]
then
\[
A_i(X)=A_i(X').
\]

\end{corollary}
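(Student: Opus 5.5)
The plan is to unfold the definition of the local audit score and use the standing assumption of this section that the prediction model $f$ is fixed. Here $\mathbf g_i(X)$ denotes the $k$-dimensional vector of pairwise scores $\big(g(f(x_i),f(x_j))\big)_{x_j\in N_k(i)}$, and $\mathbf g_i(X')$ is the corresponding vector $\big(g(f(x_i'),f(x_j'))\big)_{x_j'\in N_k'(i)}$. We read the fixed model as assigning the perturbed point $x_j'$ the same prediction as $x_j$, i.e.\ $f(x_j')=f(x_j)$, since perturbations affect the audit only through neighborhood geometry. This is the interpretation under which the paper states that ``any change in an audit score arises solely through changes in the underlying neighborhood structure.''

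First, I will identify neighborhoods by index. The hypothesis $N_k'(i)=N_k(i)$ means the index sets $\{j: x_j'\in N_k'(i)\}$ and $\{j : x_j\in N_k(i)\}$ coincide. For each such index $j$, the pairwise score satisfies $g(f(x_i'),f(x_j'))=g(f(x_i),f(x_j))$, because both arguments of $g$ are unchanged. Hence the two score vectors contain exactly the same entries.

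The one subtle step is ordering. $\Phi$ acts on $\mathbb{R}^k$, so the vector must be listed in some order. There are two ways to handle this. One is to fix the convention that coordinates are ordered by index $j$, or by the deterministic tie-breaking rule applied to the fixed index set. The other is to note that the standard aggregators (mean, trimmed and Winsorized means, $M$-estimators) are permutation invariant. I will adopt the first convention, which gives $\mathbf g_i(X')=\mathbf g_i(X)$ as vectors outright. If the authors order by distance instead, I will invoke permutation invariance of $\Phi$. This is the main obstacle, though it is a bookkeeping issue rather than a genuine difficulty.

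Finally, applying the function $\Phi$ to equal inputs gives $A_i(X')=\Phi(\mathbf g_i(X'))=\Phi(\mathbf g_i(X))=A_i(X)$. Equivalently, the $L_\Phi$-Lipschitz bound gives $|A_i(X)-A_i(X')|\le L_\Phi\|\mathbf g_i(X)-\mathbf g_i(X')\|_1=0$. Combined with Theorem~\ref{thm:neighborhood}, this immediately yields audit invariance whenever $\gamma_i>4\varepsilon$.
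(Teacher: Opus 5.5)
Your proof is correct and follows the paper's argument: the fixed model plus the preserved neighborhood give $\mathbf g_i(X)=\mathbf g_i(X')$, and applying $\Phi$ to equal inputs yields $A_i(X)=A_i(X')$. Your remark about fixing a coordinate ordering, or invoking permutation invariance of $\Phi$, is bookkeeping the paper leaves implicit; it is a useful clarification rather than a different route.
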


\begin{proof}[Proof Sketch]
Neighborhood preservation implies that the same neighbors contribute identical pairwise fairness scores before and after perturbation. Since the prediction model is fixed, the fairness-score vectors coincide, yielding identical audit scores. The full proof can be found in \ref{appendix:A4}Appendix~A4.
\end{proof}

When neighborhoods change, audit stability depends on the extent of neighborhood replacement.

\begin{definition}[Neighborhood Replacement Set]
\label{def:replacement}

For an individual $x_i$, define
\[
R_i
=
N_k(i)\triangle N_k'(i),
\]
where $\triangle$ denotes the symmetric difference.

\end{definition}

\begin{theorem}[Audit Stability via Neighborhood Replacement]
\label{thm:replacement}

Let
\[
A_i(X)=\Phi(\mathbf g_i(X))
\]
be a local fairness audit, where
\[
\Phi:\mathbb{R}^{k}\rightarrow\mathbb{R}
\]
is an $L_{\Phi}$-Lipschitz aggregation operator. Suppose the prediction model is fixed, the pairwise fairness scores satisfy
\[
|g_{ij}|\le M,
\]
and let
\[
R_i=N_k(i)\triangle N_k'(i)
\]
denote the neighborhood replacement set. Then
\[
|A_i(X)-A_i(X')|
\le
L_{\Phi}M|R_i|.
\]

\end{theorem}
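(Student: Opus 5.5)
The approach is to compare the two pairwise-score vectors coordinate by coordinate, after aligning the neighbors that survive the perturbation, and then apply the Lipschitz property of $\Phi$ in the $\ell_1$ norm. Since the prediction model $f$ is fixed, the score $g_{ij}=g(f(x_i),f(x_j))$ depends only on the indices $i,j$, not on the positions $x_i'$, $x_j'$. So a neighbor $x_j\in N_k(i)\cap N_k'(i)$ contributes exactly the same value to $\mathbf g_i(X)$ and $\mathbf g_i(X')$. This is the same observation that underlies Corollary~\ref{cor:invariance}.

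First I would fix the bookkeeping. Both $N_k(i)$ and $N_k'(i)$ have exactly $k$ elements. Hence $|N_k(i)\setminus N_k'(i)|=|N_k'(i)\setminus N_k(i)|=:m$ and $|R_i|=2m$. I then build the two $k$-vectors with a common indexing:
\begin{itemize}
\item the $k-m$ shared neighbors occupy the same coordinates in both vectors;
\item the remaining $m$ coordinates pair each removed neighbor $x_j\in N_k(i)\setminus N_k'(i)$ with an arbitrary newly added neighbor $x_{j'}\in N_k'(i)\setminus N_k(i)$.
\end{itemize}
With this alignment, $\mathbf g_i(X)$ and $\mathbf g_i(X')$ agree on the first $k-m$ coordinates. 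On each of the other $m$ coordinates they differ by $|g_{ij}-g_{ij'}|\le 2M$, using $|g_{ij}|\le M$. Therefore $\|\mathbf g_i(X)-\mathbf g_i(X')\|_1\le 2Mm=M|R_i|$. The Lipschitz definition then gives $|A_i(X)-A_i(X')|\le L_\Phi M|R_i|$. The case $R_i=\emptyset$ recovers Corollary~\ref{cor:invariance}.

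The main obstacle is justifying the common indexing. The audit is defined as $\Phi$ applied to the score vector ``$(g(f(x_i),f(x_j)))_{x_j\in N_k(i)}$'', and its coordinate order is not specified. If $\Phi$ is permutation-invariant, the reordering is free. This covers the mean, trimmed and Winsorized means, and symmetric $M$-estimators, all listed as examples after the Lipschitz definition. If $\Phi$ is order-sensitive, for instance a weighted mean with weights indexed by distance rank, a shared neighbor may change rank after perturbation. The aligned vector would then not be the actual input to $\Phi$, and the bound could fail. I would handle this by stating explicitly that $\Phi$ is symmetric, or equivalently that the audit is evaluated on the matched indexing. With that assumption, the argument above goes through with no further computation.
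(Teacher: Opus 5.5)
Your proof is correct and follows the paper's route: the paper likewise matches coordinates by neighbor identity so that shared neighbors contribute zero, bounds the remaining $\ell_1$ difference by $M|R_i|$, and then applies the $\ell_1$-Lipschitz property of $\Phi$, with these two steps split into separate lemmas. Your explicit pairing of the $m$ removed neighbors with the $m$ added ones (each coordinate differing by at most $2M$, with $|R_i|=2m$) and your symmetry caveat on $\Phi$ make precise what the paper leaves implicit, namely its phrase ``each replaced neighbor contributes at most $M$'' and its identity-based matching of entries.
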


\begin{proof}[Proof Sketch]
Common neighbors contribute identical pairwise fairness scores before and after perturbation because the prediction model is fixed. Thus, only neighbors in the replacement set contribute to changes in the fairness-score vector, with each replaced neighbor contributing at most $M$ to its $\ell_1$ difference. The result then follows immediately from the Lipschitz continuity of $\Phi$. The complete proof is provided in \ref{appendix:A4}Appendix~A4.
\end{proof}

\begin{corollary}[Mean Aggregation]
\label{cor:mean}

For arithmetic mean aggregation,
\[
\Phi(z)
=
\frac1k
\sum_{i=1}^{k}z_i,
\]
the Lipschitz constant is
\[
L_{\Phi}=\frac1k.
\]
Consequently,
\[
|A_i(X)-A_i(X')|
\le
\frac{M}{k}|R_i|.
\]

\end{corollary}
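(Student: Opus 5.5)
The plan has two steps. First, I will compute the $\ell_1$-Lipschitz constant of the arithmetic mean directly. Then I will substitute it into Theorem~\ref{thm:replacement}.

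For the first step, take arbitrary $u,v\in\mathbb{R}^k$. By linearity of $\Phi$ and the triangle inequality,
\[
|\Phi(u)-\Phi(v)|=\frac1k\Bigl|\sum_{j=1}^{k}(u_j-v_j)\Bigr|\le\frac1k\sum_{j=1}^{k}|u_j-v_j|=\frac1k\|u-v\|_1 .
\]
So $\Phi$ is $\frac1k$-Lipschitz with respect to $\ell_1$. I would also note that the constant cannot be improved: choosing $u-v$ with all coordinates of the same sign gives equality. This justifies calling $L_\Phi=1/k$ \emph{the} Lipschitz constant. The claim itself only needs the upper bound.

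For the second step, all hypotheses of Theorem~\ref{thm:replacement} hold: the model is fixed, $|g_{ij}|\le M$, and $\Phi$ is $L_\Phi$-Lipschitz with $L_\Phi=1/k$. Substituting gives $|A_i(X)-A_i(X')|\le \frac{M}{k}|R_i|$ immediately.

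There is no real obstacle here. The only point needing care is bookkeeping inside Theorem~\ref{thm:replacement}, which I take as given. Its argument must align the two score vectors so that common neighbors occupy matching coordinates. The mean is permutation invariant, so any such alignment leaves $\Phi$ unchanged. This makes the reduction especially clean for mean aggregation.
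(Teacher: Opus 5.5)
Your proposal is correct and matches the paper's proof. The paper likewise bounds $|\Phi(u)-\Phi(v)|$ by $\frac1k\|u-v\|_1$ via the triangle inequality and then substitutes $L_\Phi=1/k$ into Theorem~\ref{thm:replacement}; your remarks on the sharpness of $1/k$ and on the permutation invariance of the mean (which makes the neighbor-matching in Theorem~\ref{thm:replacement} harmless) are correct additions the paper does not spell out.
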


\begin{proof}[Proof Sketch]
The arithmetic mean is $1/k$-Lipschitz with respect to the $\ell_1$ norm. Substituting $L_{\Phi}=1/k$ into Theorem~\ref{thm:replacement} yields the stated bound. The complete proof is provided in \ref{appendix:A4}Appendix~A4.
\end{proof}

Theorem~\ref{thm:replacement} establishes that the robustness of a local fairness audit is governed by two complementary factors: the sensitivity of the aggregation operator, captured by its Lipschitz constant, and the extent of neighborhood replacement induced by feature space perturbations. Combined with Theorem~\ref{thm:neighborhood}, it completes the robustness pipeline underlying our framework: bounded feature space perturbations induce controlled neighborhood replacement, and the resulting audit deviation is bounded linearly by the number of replaced neighbors. The arithmetic mean follows as an immediate special case, while the same analysis applies to weighted means, trimmed means, and other Lipschitz aggregation operators.

\subsection{Audit Volatility}

The preceding results characterize the worst case deviation of a local fairness audit under a single perturbation. In practice, however, robustness is typically assessed over repeated perturbations drawn from a stochastic perturbation model. We therefore introduce \emph{audit volatility}, which measures the expected variation of a local fairness audit across perturbation realizations.

\begin{definition}[Audit Volatility]
\label{def:volatility}

Let $\mathcal{P}$ denote a probability distribution over admissible perturbations. The \emph{audit volatility} of an individual $x_i$ is
\[
\kappa_i
=
\mathbb{E}_{X'\sim\mathcal{P}}
\!\left[
|A_i(X)-A_i(X')|
\right].
\]

The corresponding dataset level audit volatility is
\[
\kappa
=
\frac{1}{n}
\sum_{i=1}^{n}
\kappa_i.
\]

\end{definition}

\begin{theorem}[Audit Volatility Bound]
\label{thm:volatility}

Under the assumptions of Theorem~\ref{thm:replacement},
\[
\kappa_i
\le
L_\Phi M
\,
\mathbb E
\!\left[
|R_i|
\right].
\]

In particular, for arithmetic mean aggregation,
\[
\kappa_i
\le
\frac{M}{k}
\,
\mathbb E
\!\left[
|R_i|
\right].
\]

\end{theorem}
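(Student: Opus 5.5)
The plan is to apply Theorem~\ref{thm:replacement} pointwise, for each realization of the perturbation, and then take expectations. Fix an individual $x_i$ and let $X'\sim\mathcal{P}$. Under the standing assumptions (fixed $f$, $|g_{ij}|\le M$, $\Phi$ being $L_\Phi$-Lipschitz in $\ell_1$), Theorem~\ref{thm:replacement} gives, for every admissible realization $X'$,
\[
|A_i(X)-A_i(X')|\le L_\Phi M\,|R_i(X')|,
\]
where $R_i(X')=N_k(i)\triangle N_k'(i)$ is now viewed as a random set depending on $X'$.

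Next I would check that both sides are legitimate random variables with well-defined expectations. The map $X'\mapsto N_k'(i)$ takes finitely many values, since it ranges over $k$-subsets of $\{1,\dots,n\}$. With deterministic tie-breaking, each value is attained on a set determined by finitely many distance comparisons, so it is measurable as long as $d$ is continuous, which I would simply assume of $\mathcal{P}$. The same reasoning makes $A_i(X')=\Phi(\mathbf g_i(X'))$ a measurable function of $N_k'(i)$ alone, and hence a simple random variable.

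Both quantities are also bounded. We have $|R_i|\le 2k$, and $|A_i(X)-A_i(X')|\le 2kL_\Phi M$ by the same theorem. So both expectations are finite, and monotonicity of expectation applied to the pointwise inequality gives
\[
\kappa_i=\mathbb E\big[|A_i(X)-A_i(X')|\big]\le L_\Phi M\,\mathbb E\big[|R_i|\big].
\]

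The mean-aggregation statement then follows by substituting $L_\Phi=1/k$, as in Corollary~\ref{cor:mean}. If desired, the bound can be combined with Corollary~\ref{cor:probabilistic} through $|R_i|\le 2k\,\mathbf 1\{N_k'(i)\ne N_k(i)\}$, which gives $\kappa_i\le 2kL_\Phi M\Pr(R\ge\gamma_i/4)$; this refinement is not needed for the statement itself.

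The only step that needs any care is the measurability bookkeeping, namely confirming that the pointwise bound of Theorem~\ref{thm:replacement} holds for every realization in the support of $\mathcal P$. This requires each such realization to be an admissible perturbation for which the neighborhoods and scores are well-defined. Once that is granted, the result is a direct consequence of monotonicity and linearity of expectation.
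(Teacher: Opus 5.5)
Your proof is correct and follows the paper's argument: apply Theorem~\ref{thm:replacement} pointwise for each realization $X'\sim\mathcal{P}$, take expectations using monotonicity and linearity to pull out the constant $L_\Phi M$, and substitute $L_\Phi=1/k$ for the mean. The measurability and boundedness checks ($|R_i|\le 2k$) and the optional refinement $\kappa_i\le 2kL_\Phi M\Pr(R\ge\gamma_i/4)$ via Corollary~\ref{cor:probabilistic} are valid additions that the paper's proof does not include.
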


\begin{proof}[Proof Sketch]
The result follows by taking expectations on both sides of Theorem~\ref{thm:replacement} and applying the linearity of expectation. The complete proof is provided in \ref{appendix:A5}Appendix~A5.
\end{proof}

Theorem~\ref{thm:volatility} shows that audit volatility is jointly determined by the sensitivity of the aggregation operator and the expected amount of neighborhood replacement induced by the perturbation model. Together with Corollary~\ref{cor:probabilistic} and Theorem~\ref{thm:replacement}, it completes the theoretical framework developed in this section: the local separation margin governs neighborhood preservation under perturbations, neighborhood replacement quantifies the resulting deviation of a local fairness audit, and the expected amount of neighborhood replacement determines audit volatility under stochastic perturbations.

\section{Experimental Evaluation}

We evaluate whether the proposed geometric framework accurately explains the robustness of Neighborhood-based fairness audits under feature perturbations. Specifically, we address four questions: (i) do bounded perturbations affect neighborhood geometry as predicted by the theory; (ii) does neighborhood replacement explain audit instability; (iii) how does the aggregation operator influence robustness; and (iv) how does the intrinsic geometry of a dataset affect audit stability?

\subsection{Experimental Setup}

We evaluate the proposed framework on the Adult Income, Bank Marketing, and COMPAS datasets obtained through the AI Fairness 360 toolkit and UCI~\cite{aif360-oct-2018,UCI}. Continuous attributes are standardized, categorical features are one-hot encoded, and a logistic regression classifier is trained using the default \texttt{scikit-learn} implementation. Unless otherwise stated, local fairness audits use Euclidean $k$-nearest neighborhoods with $k=10$, prediction consistency as the pairwise fairness score, and arithmetic mean aggregation.

Perturbed datasets are generated according to Definition~2. Unless otherwise specified, perturbations are sampled uniformly from an $\ell_2$ ball of radius $\epsilon$. To evaluate the distribution independence of the proposed theory, we additionally consider spherical, clipped Gaussian, and sparse coordinate perturbations under the same perturbation budget.

Throughout the experiments, we report the Neighborhood Preservation Rate (NPR), the expected neighborhood replacement size $\mathbb{E}[|R_i|]$, and the dataset-level audit volatility

\[
\kappa=
\frac1n
\sum_{i=1}^{n}
\mathbb{E}
\!\left[
|A_i(X)-A_i(X')|
\right].
\]

All reported values are averaged over 30 Monte Carlo perturbation trials.

\subsection{Empirical Validation of Geometric Stability}

We first validate the geometric results established in Lemmas~1 and~2 and Theorem~3. These results characterize how bounded perturbations affect pairwise distances, local separation margins, and neighborhood preservation.

\begin{table}[t]
\centering
\caption{Empirical validation of the geometric stability bounds established in Lemmas~1 and~2 under $\ell_2$ perturbations with $\epsilon=0.05$. Across all datasets, the observed maxima remain below the corresponding theoretical upper bounds.}
\label{tab:lemma_validation}
\small
\begin{tabular}{lcccc}
\toprule
Dataset &
$\max |d'-d|$ &
Theory($2\epsilon$) &
$\max(\gamma-\gamma')$ &
Theory($4\epsilon$)\\
\midrule
Adult  & 0.033 & 0.100 & 0.032 & 0.200\\
Bank   & 0.038 & 0.100 & 0.047 & 0.200\\
COMPAS & 0.066 & 0.100 & 0.025 & 0.200\\
\bottomrule
\end{tabular}
\end{table}

For each dataset, we generate perturbations uniformly within an $\ell_2$ ball of radius $\epsilon=0.05$ and measure (i) the maximum pairwise distance distortion,
$\max_{i,j}|d'(x_i,x_j)-d(x_i,x_j)|$, and (ii) the maximum reduction in the local separation margin,
$\max_i(\gamma_i-\gamma_i')$.
Table~\ref{tab:lemma_validation} shows that the observed maxima remain well below the theoretical upper bounds of $2\epsilon$ and $4\epsilon$, providing empirical support for Lemmas~1 and~2.

\begin{figure}[t]
    \centering
    \includegraphics[width=\linewidth]{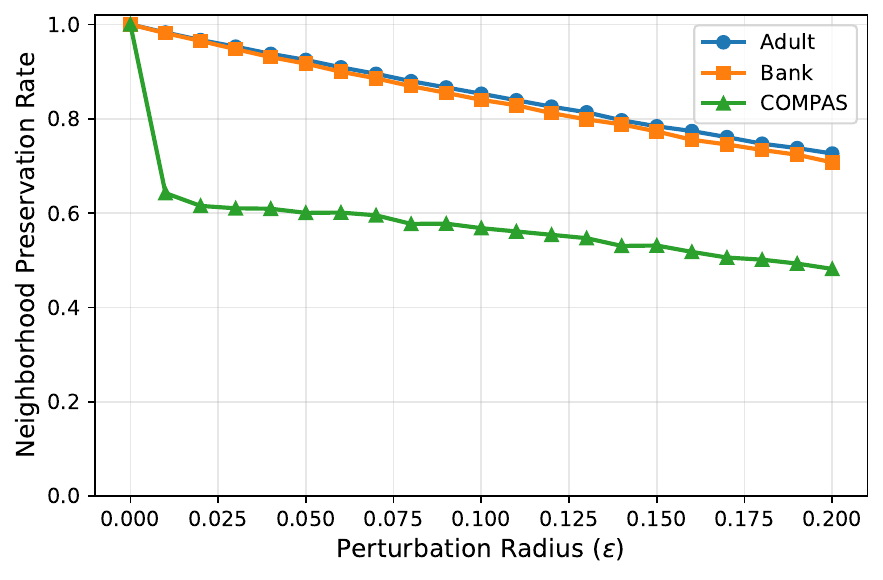}
    \caption{Neighborhood Preservation Rate (NPR) as a function of the perturbation radius $\epsilon$. Larger perturbations progressively reduce neighborhood stability, consistent with Theorem~3.}
    \label{fig:npr}
\end{figure}

We next evaluate Theorem~3 by measuring the Neighborhood Preservation Rate (NPR) as the perturbation radius increases. Figure~\ref{fig:npr} shows a monotonic decrease in NPR across all datasets, indicating that neighborhood stability degrades predictably with perturbation magnitude. Adult and Bank remain substantially more stable than COMPAS, reflecting larger local separation margins. Together, these results validate the geometric foundations of the proposed framework and identify the local separation margin as an effective certificate for neighborhood preservation.

\subsection{Neighborhood Replacement Explains Audit Instability}

Our theory identifies neighborhood replacement as the fundamental mechanism governing audit instability. We evaluate this hypothesis by measuring the expected neighborhood replacement $\mathbb{E}[|R_i|]$ and the resulting audit volatility under increasing perturbation magnitudes and multiple perturbation models.

\begin{figure}[t]
    \centering
    \includegraphics[width=\linewidth]{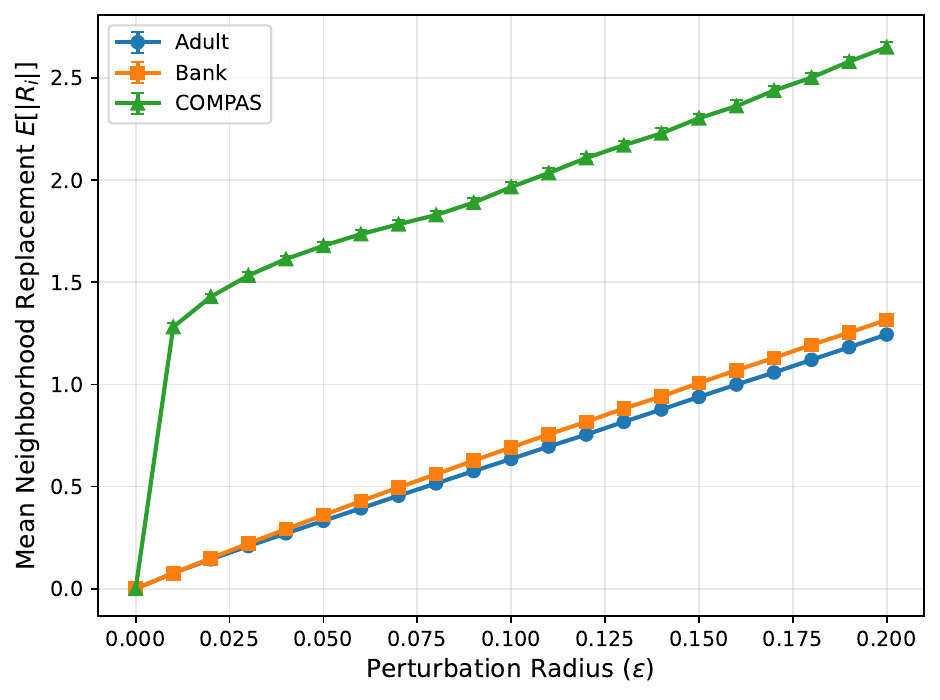}
    \caption{Expected neighborhood replacement $\mathbb{E}[|R_i|]$ as a function of the perturbation radius $\epsilon$. Neighborhood replacement increases monotonically across all datasets, with COMPAS exhibiting substantially greater geometric sensitivity than Adult and Bank.}
    \label{fig:replacement}
\end{figure}

Figure~\ref{fig:replacement} shows that neighborhood replacement increases monotonically with the perturbation radius for all datasets. Adult and Bank exhibit relatively gradual growth, whereas COMPAS consistently experiences substantially larger neighborhood replacement, reflecting its smaller local separation margins. These results provide empirical support for the geometric characterization developed in Section~4.

\begin{figure}[t]
    \centering
    \includegraphics[width=\linewidth]{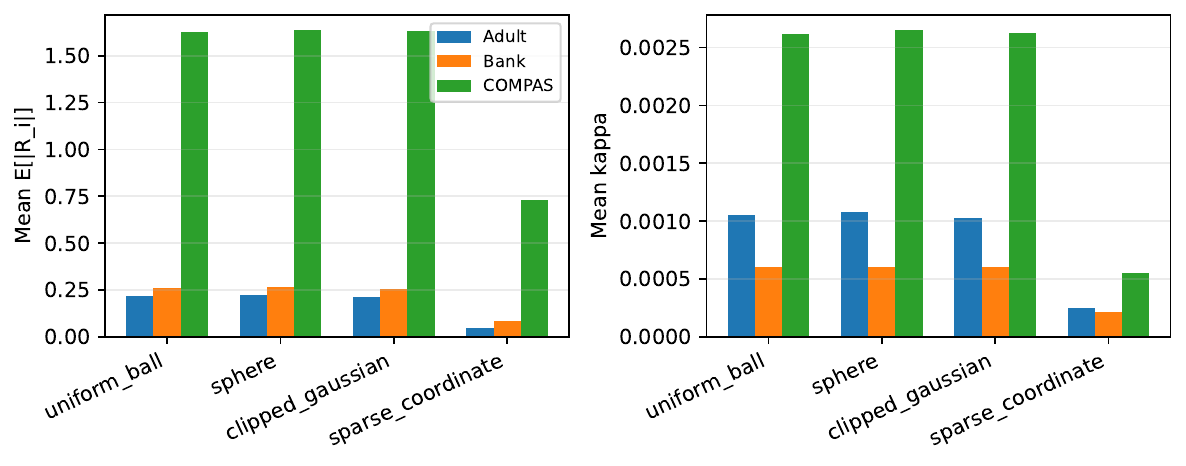}
    \caption{Neighborhood replacement (left) and audit volatility (right) under four perturbation models with identical perturbation budgets. Despite differences in the perturbation distributions, audit volatility consistently follows neighborhood replacement across all datasets.}
    \label{fig:perturbation_models}
\end{figure}

Figure~\ref{fig:perturbation_models} further demonstrates that audit volatility closely tracks neighborhood replacement under four distinct perturbation models. Although the perturbation distributions differ substantially, the observed audit volatility is primarily determined by the amount of neighborhood replacement, supporting the distribution independent characterization established by Theorem~\ref{thm:volatility}.

For arithmetic mean aggregation, Theorem~\ref{thm:volatility} predicts
\[
\kappa_i
\le
\frac{M}{k}\,
\mathbb{E}[|R_i|],
\]
where $M=1$ for the bounded consistency score.

\begin{figure}[t]

\caption{Empirical validation of Theorem~\ref{thm:volatility}. Top: audit volatility increases with the perturbation radius. Bottom: after normalization by the theoretical upper bound $(M/k)\mathbb{E}[|R_i|]$, all empirical values remain below one across Adult, Bank, and COMPAS.}
\label{fig:theorem9-validation}
\end{figure}

Figure~\ref{fig:theorem9-validation} validates the theoretical bound over increasing perturbation radii. Audit volatility grows monotonically with perturbation magnitude while remaining well below the predicted upper bound across all datasets, with no observed violations. Together, these experiments demonstrate that neighborhood replacement provides an accurate and predictive explanation for the robustness of Neighborhood-based fairness audits.

\subsection{Aggregation Robustness}

Theorem~6 predicts that audit stability depends on the Lipschitz constant of the aggregation operator. We evaluate this prediction by comparing four representative aggregators: arithmetic mean, $20\%$ trimmed mean, median, and worst neighbor aggregation under identical perturbations.

\begin{figure}[t]
    \centering
    \includegraphics[width=\linewidth]{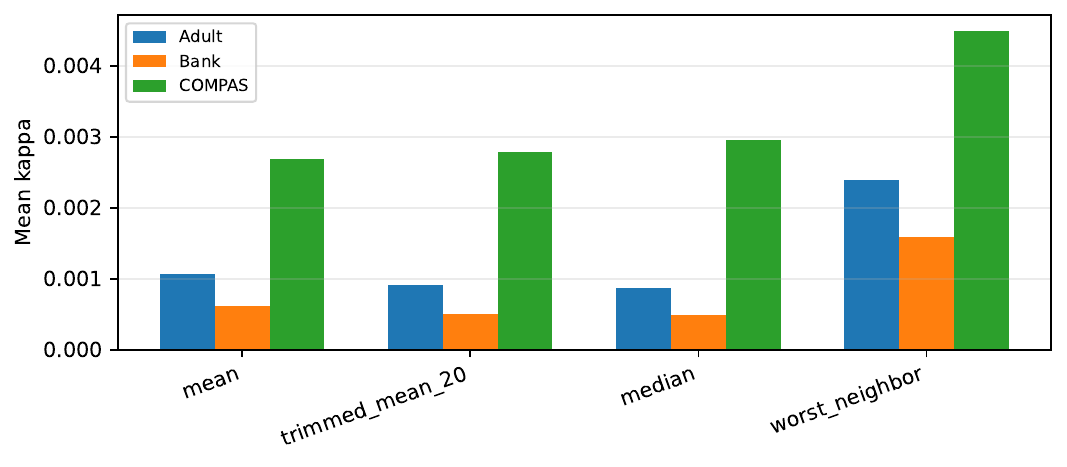}
    \caption{Audit volatility under different aggregation operators. Robust aggregators (trimmed mean and median) consistently exhibit lower volatility than the worst neighbor operator, supporting Theorem~6.}
    \label{fig:aggregation_sensitivity}
\end{figure}

Figure~\ref{fig:aggregation_sensitivity} shows that aggregation choice has a direct impact on audit robustness across all datasets. The trimmed mean and median consistently produce lower audit volatility than the worst neighbor operator, while the arithmetic mean exhibits intermediate behavior. These observations are consistent with Theorem~6, confirming that aggregation sensitivity is a key determinant of the robustness of Neighborhood-based fairness audits.
\subsection{Effect of Dataset Geometry}

Finally, we investigate how the intrinsic geometry of a dataset influences audit robustness. Since Theorem~3 identifies the local separation margin as the geometric certificate for neighborhood preservation, we examine its relationship with neighborhood replacement and audit volatility.

\begin{figure}[t]
    \centering
    \includegraphics[width=\linewidth]{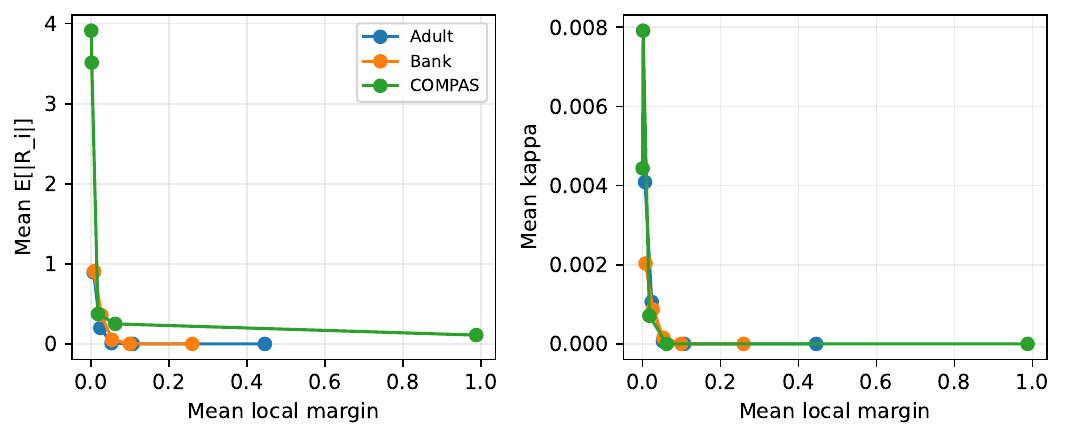}
    \caption{Effect of the local separation margin on audit robustness. \textbf{Left:} Expected neighborhood replacement $\mathbb{E}[|R_i|]$ decreases with increasing separation margin. \textbf{Right:} Audit volatility $\kappa$ exhibits the same trend, approaching zero for well-separated neighborhoods.}
    \label{fig:margin_stratification}
\end{figure}

Figure~\ref{fig:margin_stratification} shows that larger local separation margins consistently produce smaller neighborhood replacement and, consequently, lower audit volatility across all datasets. For well separated neighborhoods, audit volatility approaches zero, indicating that fairness assessments become increasingly insensitive to bounded perturbations. These observations provide empirical support for Theorem~3, confirming that the local separation margin is the primary geometric factor governing the robustness of Neighborhood-based fairness audits.

\section{Conclusion}

We presented a geometric framework for analyzing the robustness of Neighborhood-based fairness audits under bounded feature space perturbations. By modeling local fairness audits as Lipschitz aggregations of pairwise fairness evaluations, we established deterministic, probabilistic, and expected stability guarantees that relate feature space perturbations to neighborhood replacement and audit deviation. Our analysis identified the local separation margin as a geometric certificate for neighborhood preservation and introduced audit volatility as a quantitative measure of audit robustness. Experimental results on benchmark datasets validated the theoretical predictions, demonstrating that neighborhood replacement provides the key geometric mechanism governing audit stability. Together, these results establish a principled foundation for the design and analysis of robust Neighborhood-based fairness auditing methods.

\bibliography{AuthorKit27/AuthorKit27/arxiv}
\bibliographystyle{plain}


\newpage
\appendix

\section{Appendix A : Stability of Local Fairness Audits}
\label{appendix:A}

The general framework introduced in the previous section expresses a local fairness audit as the composition of two components: a pairwise fairness function that quantifies fairness relationships between neighboring individuals and an aggregation operator that summarizes these local relationships into an individual audit score. This abstraction naturally raises the following question:

\begin{quote}
\emph{How sensitive are local fairness audits to small perturbations of the underlying feature space?}
\end{quote}

Answering this question is essential for understanding the reliability of Neighborhood-based fairness assessments. Since the prediction model is fixed throughout the audit, perturbations affect fairness scores only indirectly through changes in the geometric structure of the feature space. Our analysis therefore separates the effect of feature-space perturbations from the auditing procedure itself and studies how geometric changes propagate to local fairness assessments.

The theoretical development proceeds in three stages. We first formalize a geometric perturbation model and introduce the notion of local separation. We then establish sufficient conditions under which local neighborhoods remain invariant under bounded perturbations. Finally, we quantify how neighborhood changes affect the stability and volatility of local fairness audits.

\subsection{Appendix A2: Geometric Stability of Local Neighborhoods}
\label{appendix:A2}
The perturbation model introduced in the previous subsection specifies how the feature representation of each individual may vary under bounded perturbations. Since local fairness audits are computed from $k$-nearest neighbor sets, understanding the stability of these neighborhoods is a prerequisite for analyzing the robustness of the resulting fairness assessments. Our analysis proceeds in three steps. We first bound the perturbation of pairwise distances, then characterize the maximum reduction of the local separation margin, and finally derive a sufficient condition guaranteeing the invariance of local neighborhoods under bounded perturbations.



Here we present proof of lemma~1.

\begin{proof}

By the triangle inequality, $d(x_i',x_j')
\le
d(x_i',x_i)
+
d(x_i,x_j)
+
d(x_j,x_j').$ Since $d(x_i',x_i)\le\varepsilon,
\qquad
d(x_j',x_j)\le\varepsilon,$ we obtain

$d(x_i',x_j')
\le
d(x_i,x_j)+2\varepsilon.$

Similarly, $d(x_i,x_j)
\le
d(x_i,x_i')
+
d(x_i',x_j')
+
d(x_j',x_j),$ which implies $d(x_i',x_j')
\ge
d(x_i,x_j)-2\varepsilon.$ Combining the two inequalities yields

$
\left|
d(x_i',x_j')
-
d(x_i,x_j)
\right|
\le
2\varepsilon,$ proving the lemma.

\end{proof}

Lemma~\ref{lem:distance} shows that every pairwise distance changes by at most $2\varepsilon$ under the perturbation model. Neighborhood membership, however, depends on the relative ordering of distances rather than on their absolute values. The relevant quantity is therefore the separation margin between the farthest neighbor and the nearest non-neighbor. The next lemma shows that this margin can decrease by at most $4\varepsilon$.

\subsection{Proof of lemma~2.}














\begin{proof}

Fix any point $x_j\notin N_k(i)$. By Lemma~\ref{lem:distance},

\[
d(x_i',x_j')
\ge
d(x_i,x_j)-2\varepsilon.
\]

Taking the minimum over all points outside the neighborhood gives

\[
r_{\min}'(i)
=
\min_{x_j\notin N_k(i)}
d(x_i',x_j')
\ge
r_{\min}(i)-2\varepsilon.
\]

Similarly, fix any point $x_j\in N_k(i)$. Again by Lemma~\ref{lem:distance},

\[
d(x_i',x_j')
\le
d(x_i,x_j)+2\varepsilon.
\]

Taking the maximum over all neighbors gives

\[
r_{\max}'(i)
=
\max_{x_j\in N_k(i)}
d(x_i',x_j')
\le
r_{\max}(i)+2\varepsilon.
\]

Therefore,

\[
\begin{aligned}
\gamma_i'
&=
r_{\min}'(i)-r_{\max}'(i)\\
&\ge
\left(r_{\min}(i)-2\varepsilon\right)
-
\left(r_{\max}(i)+2\varepsilon\right)\\
&=
\gamma_i-4\varepsilon.
\end{aligned}
\]

Equivalently,

\[
\gamma_i-\gamma_i'
\le
4\varepsilon.
\]

To show that the bound is attained, consider the case $k=1$ with three collinear points
$a_i,x_i,b_i$, where
$a_i$ is the unique neighbor of $x_i$ and
$b_i$ is the unique non-neighbor.
Move \(a_i\) away from \(x_i\), move \(b_i\) toward \(x_i\), and move \(x_i\) toward \(b_i\), each by distance \(\varepsilon\). Then

\[
r_{\max}'
=
r_{\max}+2\varepsilon,
\]

and

\[
r_{\min}'
=
r_{\min}-2\varepsilon.
\]

Consequently,

\[
\gamma_i-\gamma_i'
=
4\varepsilon,
\]

showing that the bound is tight.

\end{proof}

Lemma~\ref{lem:margin} identifies $4\varepsilon$ as the largest possible reduction of the local separation margin under the perturbation model. Consequently, whenever the original separation margin exceeds this worst-case reduction, the ordering between neighbors and non-neighbors is preserved.













\subsection{Proof of Theorem 3.}
\begin{proof}

Fix arbitrary points

\[
x_a\in N_k(i),
\qquad
x_b\notin N_k(i).
\]

By Definition~\ref{def:margin},

\[
d(x_i,x_a)\le r_{\max}(i),
\qquad
d(x_i,x_b)\ge r_{\min}(i),
\]

which implies

\[
d(x_i,x_b)-d(x_i,x_a)
\ge
r_{\min}(i)-r_{\max}(i)
=
\gamma_i.
\]

By Lemma~2,

\[
d(x_i',x_a')
\le
d(x_i,x_a)+2\varepsilon,
\]

and

\[
d(x_i',x_b')
\ge
d(x_i,x_b)-2\varepsilon.
\]

Therefore,

\[
\begin{aligned}
d(x_i',x_b')-d(x_i',x_a')
&\ge
\left(d(x_i,x_b)-2\varepsilon\right)
-
\left(d(x_i,x_a)+2\varepsilon\right)\\
&=
d(x_i,x_b)-d(x_i,x_a)-4\varepsilon\\
&\ge
\gamma_i-4\varepsilon.
\end{aligned}
\]

Since

\[
\gamma_i>4\varepsilon,
\]

it follows that

\[
d(x_i',x_b')-d(x_i',x_a')>0,
\]

or equivalently,

\[
d(x_i',x_a')
<
d(x_i',x_b').
\]

Because the choice of $x_a$ and $x_b$ was arbitrary, the above inequality holds for every original neighbor and every original non-neighbor. Thus, every point in $N_k(i)$ remains strictly closer to $x_i'$ than every point outside $N_k(i)$ after perturbation.

Since both $N_k(i)$ and $N_k'(i)$ contain exactly $k$ points, the first $k$ positions in the perturbed distance ordering are occupied precisely by the points in $N_k(i)$. Therefore,

\[
N_k'(i)=N_k(i).
\]

\end{proof}

Theorem~\ref{thm:neighborhood} identifies the local separation margin as a geometric certificate for neighborhood preservation. The proof shows that whenever the separation margin exceeds the maximum perturbation-induced distortion of pairwise distances, every original neighbor remains strictly closer than every original non-neighbor after perturbation, thereby preserving the $k$-nearest neighbor set. Lemma~\ref{lem:margin} further establishes that the threshold $4\varepsilon$ is intrinsic to the perturbation model: the local separation margin can decrease by at most $4\varepsilon$, and this bound is tight. Consequently, neighborhoods with separation margins exceeding this threshold remain invariant under every admissible perturbation. Since the proposed fairness audits depend exclusively on local neighborhoods, Theorem~\ref{thm:neighborhood} provides the geometric bridge between feature-space perturbations and audit robustness, forming the foundation for the deterministic, probabilistic, and expected stability guarantees developed in the following subsections.

\begin{remark}[Optimality of the Stability Threshold]
The condition $\gamma_i>4\varepsilon$ in Theorem~3 is
essentially optimal. By Lemma~2, the local separation margin
may decrease by exactly $4\varepsilon$ under an admissible
$\varepsilon$-bounded perturbation, and this bound is tight.
Therefore, any deterministic neighborhood-preservation
guarantee that holds uniformly over all admissible
perturbations cannot replace the threshold $4\varepsilon$ by a
smaller universal constant. Consequently, the stability
condition in Theorem~3 is intrinsic to the perturbation model
rather than an artifact of the analysis.
\end{remark}

\subsection{Appendix A3: Probabilistic Robustness under Random Perturbations}
\label{appendix:A3}
Theorem~\ref{thm:neighborhood} establishes a deterministic
criterion guaranteeing neighborhood preservation under every
admissible perturbation of bounded magnitude. In many
practical settings, however, feature perturbations arise from
stochastic sources such as measurement noise, sensor
uncertainty, or randomized preprocessing. Rather than asking
whether a neighborhood is preserved for every admissible
perturbation, it is therefore natural to quantify the
probability that a neighborhood changes under a prescribed
perturbation model.

The deterministic guarantee immediately yields the following
probabilistic robustness result.








\subsection{Proof of Corollary~4.}
\begin{proof}

Let

\[
E_i
=
\{N_k'(i)\neq N_k(i)\}
\]

denote the event that the neighborhood of $x_i$ changes.

By Theorem~\ref{thm:neighborhood},

\[
R
<
\frac{\gamma_i}{4}
\]

implies

\[
N_k'(i)=N_k(i).
\]

Equivalently,

\[
E_i
\subseteq
\left\{
R
\ge
\frac{\gamma_i}{4}
\right\}.
\]

Taking probabilities and using the monotonicity of probability,

\[
\Pr(E_i)
\le
\Pr\!\left(
R
\ge
\frac{\gamma_i}{4}
\right),
\]

which proves the result.

\end{proof}

\begin{remark}

Corollary~\ref{cor:probabilistic} separates neighborhood
robustness into geometric and stochastic components.
The local separation margin $\gamma_i$ depends solely on the
geometry of the dataset, whereas the random variable $R$
captures the uncertainty introduced by the perturbation
process. Consequently, any concentration inequality or tail
bound $R$ immediately yields a corresponding
probabilistic guarantee for neighborhood preservation.

This separation is particularly useful because the geometric
analysis developed in Theorem~\ref{thm:neighborhood} is
independent of the choice of perturbation distribution.
Therefore, the same deterministic argument applies uniformly
to a broad family of stochastic perturbation models by simply
specializing the corresponding tail bound for $R$.

The probabilistic interpretation developed above serves as the
bridge between deterministic neighborhood stability and the
audit robustness guarantees established in the following
subsection.

\end{remark}

\subsection{Appendix A4: Stability of Local Fairness Audits}
\label{appendix:A4}
Theorem~\ref{thm:neighborhood} established that sufficiently separated
local neighborhoods remain invariant under bounded perturbations. We now
investigate how this geometric stability translates into the stability
of local fairness audits. Since a local audit score is obtained by
aggregating pairwise fairness evaluations over an individual's
neighborhood, perturbations influence the audit only through changes in
the set of neighboring individuals participating in the aggregation. We
first establish the ideal case where the neighborhood remains unchanged
and then quantify the effect of neighborhood replacement on the audit
score.








\subsection{Proof of Corollary 5.}

\begin{proof}

If

\[
N_k'(i)=N_k(i),
\]

the same neighboring individuals contribute to the audit before and
after perturbation. Since the prediction model is fixed, every pairwise
fairness evaluation remains unchanged. Hence,

\[
\mathbf g_i(X)=\mathbf g_i(X'),
\]

which immediately implies

\[
A_i(X)
=
\Phi(\mathbf g_i(X))
=
\Phi(\mathbf g_i(X'))
=
A_i(X').
\]

\end{proof}

While Corollary~\ref{cor:invariance} guarantees exact stability whenever
neighborhoods are preserved, perturbations may replace some neighbors
when the local separation margin is insufficient. We therefore quantify
the resulting change in the audit score in terms of the number of
replaced neighbors.








The general framework introduced in Section~3 represents a local
fairness audit as the composition of a vector of pairwise fairness
evaluations and an aggregation operator. We first establish a general
stability result that depends only on the aggregation operator and is
independent of the source of perturbation. We then specialize this
result to geometric perturbations by relating changes in the audit score
to neighborhood replacement.

Let

\[
\mathbf g_i(X)
=
(g_{ij})_{x_j\in N_k(i)}
\]

denote the vector of pairwise fairness scores associated with
individual $x_i$, where

\[
|g_{ij}|\le M
\]

for some constant $M>0$.
\subsection{Proof of Theorem 6: Audit Stability via Neighborhood Replacement}

When the neighborhood changes under perturbation, the pairwise fairness
scores before and after perturbation are evaluated over different
neighbor sets. To compare the resulting fairness-score vectors, we match
entries according to the identities of their associated neighbors.
Consequently, pairwise fairness scores corresponding to common neighbors
are compared directly, while differences arise only from neighbors that
The following lemma isolates the contribution of the aggregation
operator independently of the geometric changes in the neighborhood.

\begin{lemma}[Lipschitz Stability of the Aggregation Operator]
\label{lem:lipschitz}

Let
\[
A_i(X)=\Phi(\mathbf g_i(X)),
\]
where
\[
\Phi:\mathbb R^k\rightarrow\mathbb R
\]
is an $L_\Phi$-Lipschitz aggregation operator with respect to the
$\ell_1$ norm. Then

\[
|A_i(X)-A_i(X')|
\le
L_\Phi
\|
\mathbf g_i(X)-\mathbf g_i(X')
\|_1.
\]

\end{lemma}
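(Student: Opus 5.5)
This is a direct instantiation of the Lipschitz definition. The only care needed is to fix what the two fairness-score vectors are, so that both sides are evaluated at points of $\mathbb{R}^k$ and the subtraction $\mathbf g_i(X)-\mathbf g_i(X')$ makes sense.

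First, I will fix the indexing convention. Both $N_k(i)$ and $N_k'(i)$ have exactly $k$ elements, so $\mathbf g_i(X)$ and $\mathbf g_i(X')$ are both vectors in $\mathbb{R}^k$. Following the matching described just before the lemma, I order the entries as follows:
\begin{itemize}
\item common neighbors in $N_k(i)\cap N_k'(i)$ occupy the same coordinates in both vectors;
\item the remaining coordinates pair elements of $N_k(i)\setminus N_k'(i)$ with elements of $N_k'(i)\setminus N_k(i)$ in some fixed bijection.
\end{itemize}
These two difference sets have equal cardinality because both neighborhoods have size $k$, so such a bijection exists.

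If $\Phi$ is not permutation invariant, the audit must be defined with some canonical ordering. In that case I simply take $\mathbf g_i(X)$ and $\mathbf g_i(X')$ to be the vectors actually fed to $\Phi$. The lemma as stated concerns only those two vectors, so no matching argument is needed for its validity; the matching matters only later, in Theorem~\ref{thm:replacement}.

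With $u=\mathbf g_i(X)$ and $v=\mathbf g_i(X')$ in $\mathbb{R}^k$, the definitions give $A_i(X)=\Phi(u)$ and $A_i(X')=\Phi(v)$. Applying the $L_\Phi$-Lipschitz property with respect to $\|\cdot\|_1$ then yields $|A_i(X)-A_i(X')|=|\Phi(u)-\Phi(v)|\le L_\Phi\|u-v\|_1$, which is the claim.

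There is no genuine obstacle. The only delicate point is the bookkeeping in the first step: the chosen coordinate ordering must be one under which $\Phi$ actually produces $A_i$. I would note explicitly that the common aggregators (mean, trimmed mean, median) are symmetric, so any matching works for them. That observation is what lets Theorem~\ref{thm:replacement} afterwards bound $\|u-v\|_1$ using only the coordinates indexed by the replacement set.
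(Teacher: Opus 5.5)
Your proof is correct and matches the paper's argument: setting $u=\mathbf g_i(X)$ and $v=\mathbf g_i(X')$ and applying the $\ell_1$-Lipschitz definition of $\Phi$ is exactly what the paper does. As you note, your remark on coordinate ordering is not needed for this lemma, but it correctly identifies an assumption the paper leaves implicit and which matters in Lemma~\ref{lem:replacement} and Theorem~\ref{thm:replacement}: the neighbor-matched ordering must be one under which $\Phi$ still returns $A_i$, which holds automatically for symmetric aggregators.
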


\begin{proof}

Since

\[
A_i(X)=\Phi(\mathbf g_i(X)),
\qquad
A_i(X')=\Phi(\mathbf g_i(X')),
\]

the Lipschitz continuity of $\Phi$ immediately gives

\[
\begin{aligned}
|A_i(X)-A_i(X')|
&=
|\Phi(\mathbf g_i(X))
-
\Phi(\mathbf g_i(X'))|
\\
&\le
L_\Phi
\|
\mathbf g_i(X)-\mathbf g_i(X')
\|_1.
\end{aligned}
\]

\end{proof}

The next lemma shows that the difference between the fairness-score
vectors is completely determined by neighborhood replacement.

\begin{lemma}[Neighborhood Replacement Bound]
\label{lem:replacement}

Assume that

\begin{enumerate}
\item the prediction function $f$ is fixed;
\item the pairwise fairness scores satisfy
\[
|g_{ij}|\le M;
\]
\item
\[
R_i=N_k(i)\triangle N_k'(i)
\]
is the neighborhood replacement set.
\end{enumerate}

Then

\[
\|
\mathbf g_i(X)-\mathbf g_i(X')
\|_1
\le
M|R_i|.
\]

\end{lemma}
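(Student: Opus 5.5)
The plan is to make precise how the two fairness-score vectors are compared and then bound the $\ell_1$ difference coordinate by coordinate. As the paper indicates just before Lemma~\ref{lem:lipschitz}, I will match entries according to the identities of their neighbors.

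Set $C_i = N_k(i)\cap N_k'(i)$, $D_i = N_k(i)\setminus N_k'(i)$ (removed neighbors) and $E_i = N_k'(i)\setminus N_k(i)$ (added neighbors). Since $|N_k(i)|=|N_k'(i)|=k$, we get $|D_i| = k-|C_i| = |E_i|$. Hence $|R_i| = |D_i|+|E_i| = 2|D_i|$, so $R_i$ has even cardinality.

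Next I fix an alignment of the two $k$-dimensional vectors. The common neighbors in $C_i$ occupy the same coordinates in $\mathbf g_i(X)$ and $\mathbf g_i(X')$. The remaining $|D_i|$ coordinates are filled by an arbitrary bijection $\pi: D_i\to E_i$: the coordinate holding $g_{ij}$ for $x_j\in D_i$ in $\mathbf g_i(X)$ holds $g_{i\pi(j)}$ in $\mathbf g_i(X')$.

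For $x_j\in C_i$, the entry is $g(f(x_i),f(x_j))$ in both vectors, because $f$ is fixed and acts on the original individuals. These coordinates therefore contribute zero; this is the same observation as in Corollary~\ref{cor:invariance}. For each matched pair $(x_j,\pi(x_j))$, the triangle inequality and the bound $|g_{ij}|\le M$ give $|g_{ij}-g_{i\pi(j)}|\le |g_{ij}|+|g_{i\pi(j)}|\le 2M$. Summing over the $|D_i|$ pairs yields
\[
\|\mathbf g_i(X)-\mathbf g_i(X')\|_1 \le 2M|D_i| = M|R_i|.
\]
Note that counting each element of $R_i$ once, with weight $M$, gives exactly this bound.

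The main obstacle is not the arithmetic but the legitimacy of the alignment. $A_i(X)$ is defined through $\Phi$ applied to a vector whose coordinate order comes from some fixed convention, for example the deterministic distance ordering. The identity-matched ordering above may differ from that convention. I would handle this by assuming, as is implicit in the framework and holds for all listed examples (means, trimmed means, medians, worst-neighbor), that $\Phi$ is permutation invariant. Then $A_i(X')=\Phi(\mathbf g_i(X'))$ is unchanged when the coordinates of $\mathbf g_i(X')$ are reordered into the matched alignment, so the bound applies to the vectors that actually enter Lemma~\ref{lem:lipschitz}. Combining the two lemmas then gives Theorem~\ref{thm:replacement}.
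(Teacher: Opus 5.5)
Your proof is correct and follows the paper's argument: match common neighbors so their coordinates cancel, then charge the remaining coordinates to the replacement set $R_i$. Your explicit bookkeeping ($|R_i| = 2|D_i|$, with each mismatched coordinate pair contributing at most $2M$) makes precise the paper's informal claim that ``each replaced neighbor contributes at most $M$.'' Your permutation-invariance remark also correctly names the assumption hidden in the paper's identity-matched comparison of the two vectors, which is needed when this lemma is combined with the Lipschitz lemma to give Theorem~6.
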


\begin{proof}

Compare the fairness-score vectors by matching entries corresponding to
the same neighbor whenever that neighbor belongs to both
\(N_k(i)\) and \(N_k'(i)\).

For every common neighbor in

\[
N_k(i)\cap N_k'(i),
\]

the prediction model is unchanged. Hence the associated pairwise
fairness score is identical before and after perturbation, and its
contribution to the $\ell_1$ difference is zero.

Therefore, only neighbors belonging to the replacement set

\[
R_i
=
N_k(i)\triangle N_k'(i)
\]

can contribute to

\[
\|
\mathbf g_i(X)-\mathbf g_i(X')
\|_1.
\]

Since each pairwise fairness score is bounded by

\[
|g_{ij}|\le M,
\]

every replaced neighbor contributes at most \(M\) to the
$\ell_1$ difference. As the replacement set contains
\(|R_i|\) neighbors,

\[
\|
\mathbf g_i(X)-\mathbf g_i(X')
\|_1
\le
M|R_i|.
\]

\end{proof}

\begin{proof}[Proof of Theorem~6]

By Lemma~\ref{lem:lipschitz},

\[
|A_i(X)-A_i(X')|
\le
L_\Phi
\|
\mathbf g_i(X)-\mathbf g_i(X')
\|_1.
\]

Applying Lemma~\ref{lem:replacement} yields

\[
|A_i(X)-A_i(X')|
\le
L_\Phi M|R_i|,
\]

which is exactly the statement of Theorem~6.

\end{proof}









\subsection{Proof of Corollary 7}

\begin{proof}

For the arithmetic mean aggregation,

\[
\Phi(z)
=
\frac{1}{k}
\sum_{i=1}^{k} z_i.
\]

For any vectors \(u,v\in\mathbb{R}^k\),

\[
\begin{aligned}
|\Phi(u)-\Phi(v)|
&=
\left|
\frac{1}{k}
\sum_{i=1}^{k}(u_i-v_i)
\right|
\\
&\le
\frac{1}{k}
\sum_{i=1}^{k}|u_i-v_i|
\\
&=
\frac{1}{k}\|u-v\|_1.
\end{aligned}
\]

Hence, the arithmetic mean is \(1/k\)-Lipschitz with respect to the
\(\ell_1\) norm, i.e.,

\[
L_\Phi=\frac{1}{k}.
\]

Substituting this Lipschitz constant into Theorem~6 yields

\[
|A_i(X)-A_i(X')|
\le
\frac{M}{k}|R_i|,
\]

which is precisely the desired bound.

\end{proof}
\begin{remark}

The proof of Theorem~6 decomposes the robustness of a local fairness
audit into two complementary components. Lemma~\ref{lem:lipschitz}
shows that the sensitivity of the audit is governed by the Lipschitz
continuity of the aggregation operator, independent of the source of
perturbation. Lemma~\ref{lem:replacement} quantifies the geometric
effect of perturbations by relating the difference between the
fairness-score vectors to the neighborhood replacement set $R_i$.

Combined with Theorem~\ref{thm:neighborhood}, Theorem~6 establishes the
robustness pipeline underlying our framework: bounded feature-space
perturbations induce bounded neighborhood replacement, and the resulting
audit deviation is bounded by the Lipschitz constant of the aggregation
operator and the extent of neighborhood replacement. Consequently,
aggregation operators with smaller Lipschitz constants exhibit greater
robustness to neighborhood perturbations. The arithmetic mean follows as
an immediate special case, while the same analysis extends to weighted
means, trimmed means, and other Lipschitz aggregation operators.

\end{remark}
\subsection{Appendix A5: Audit Volatility}
\label{appendix:A5}

The deterministic analysis in the previous subsection characterizes the
maximum deviation of a local fairness audit under a single perturbation.
In practice, however, robustness is often assessed over repeated
perturbations drawn from a stochastic perturbation model.
Theorem~\ref{thm:volatility} bounds the resulting audit volatility in
terms of the expected amount of neighborhood replacement.

\subsection{Proof of Theorem~\ref{thm:volatility}}

\begin{proof}

By Definition~\ref{def:volatility},

\[
\kappa_i
=
\mathbb E_{X'\sim\mathcal P}
\left[
|A_i(X)-A_i(X')|
\right].
\]

Applying the deterministic audit stability bound of
Theorem~\ref{thm:replacement} inside the expectation gives

\[
\begin{aligned}
\kappa_i
&=
\mathbb E
\left[
|A_i(X)-A_i(X')|
\right]
\\
&\le
\mathbb E
\left[
L_\Phi M |R_i|
\right].
\end{aligned}
\]

Since \(L_\Phi\) and \(M\) are constants independent of the
perturbation,

\[
\kappa_i
\le
L_\Phi M
\,
\mathbb E
\left[
|R_i|
\right].
\]

For arithmetic mean aggregation,
Corollary~\ref{cor:mean} implies
\(L_\Phi=1/k\). Hence,

\[
\kappa_i
\le
\frac{M}{k}
\,
\mathbb E
\left[
|R_i|
\right],
\]

which completes the proof.

\end{proof}

\begin{remark}

Theorem~\ref{thm:volatility} identifies two complementary factors that
govern audit volatility: the sensitivity of the aggregation operator,
captured by its Lipschitz constant \(L_\Phi\), and the expected amount
of neighborhood replacement induced by the perturbation model.
Consequently, audit volatility decreases when either the aggregation
operator is less sensitive (i.e., has a smaller Lipschitz constant) or
the perturbation model induces less neighborhood replacement.

Together, Corollary~\ref{cor:probabilistic},
Theorem~\ref{thm:replacement}, and
Theorem~\ref{thm:volatility} complete the robustness framework
developed in this paper. The local separation margin governs
probabilistic neighborhood preservation, neighborhood replacement
bounds the deviation of a single local fairness audit, and the expected
amount of neighborhood replacement determines audit volatility under
stochastic perturbations.

\end{remark}

\section{Appendix B: Additional Experimental Results}

To complement the experimental evaluation in Section~5, we provide an
additional empirical validation of Theorem~\ref{thm:volatility} at the
largest perturbation radius considered in our experiments
($\epsilon=0.20$). For arithmetic mean aggregation
($k=10$, $M=1$), Corollary~\ref{cor:mean} gives the theoretical bound

\[
\kappa_i
\le
\frac{M}{k}\,
\mathbb E\!\left[|R_i|\right].
\]

\begin{table}[t]
\centering
\caption{Empirical validation of
Theorem~\ref{thm:volatility}
at the largest perturbation radius
($\epsilon=0.20$). Across all datasets, the observed audit volatility
remains below the theoretical upper bound, with no per-individual
violations.}
\label{tab:theorem8-validation}
\small
\begin{tabular}{lrrrrrr}
\toprule
Dataset &
NPR &
$\mathbb E[|R_i|]$ &
$\kappa$ &
Bound &
$\kappa/\mathrm{Bound}$ &
Viol. \\
\midrule
Adult   & 0.765 & 0.520 & 0.00238 & 0.052 & 0.046 & 0 \\
Bank    & 0.705 & 0.634 & 0.001504 & 0.063 & 0.024 & 0 \\
COMPAS  & 0.515 & 2.196 & 0.004333 & 0.220 & 0.020 & 0 \\
\bottomrule
\end{tabular}
\end{table}

Across all three datasets, the observed audit volatility remains
substantially below the corresponding theoretical upper bound, and no
per-individual violations are observed. Although the theoretical bound
is conservative, it correctly captures the dependence of audit
volatility on the expected amount of neighborhood replacement. These
results provide empirical support for
Theorem~\ref{thm:volatility}, confirming that the proposed geometric
framework yields informative and reliable robustness guarantees in
practice.

\end{document}